\documentclass{article} % For LaTeX2e
\usepackage[preprint]{colm2026_conference}

\usepackage{microtype}
\usepackage{hyperref}
\usepackage{url}
\usepackage{booktabs}
\usepackage{amsthm}
\usepackage{amssymb}
\usepackage{algorithmic}
\usepackage{booktabs, algorithm, algorithmic, xcolor}
\usepackage{amsmath}
\usepackage{multirow}
\usepackage{tcolorbox}
\usepackage{tikz}
\usetikzlibrary{positioning, shapes, arrows.meta}
\usepackage{pgfplots}
\pgfplotsset{compat=1.18}
\usetikzlibrary{backgrounds}
\usepackage{wrapfig}
\usepackage{enumitem}
\usepackage{caption}
\usepackage{cleveref}
\usepackage{listings}
\usepackage{xcolor}
\usepackage{textcomp}
\usepackage{tcolorbox}

\tcbuselibrary{skins}
\newtcolorbox{importantbox}{
  enhanced,
  colback=cyan!10,
  colframe=black!60,
  boxrule=1pt,
  arc=2pt,
  width=\linewidth,
  center,
  left=6pt,
  right=6pt,
  top=2pt,
  bottom=2pt
}

\definecolor{genblue}{RGB}{60,100,200}
\definecolor{highlight}{RGB}{255,235,160}
\definecolor{goodgreen}{RGB}{46,125,50}
\definecolor{badred}{RGB}{200,50,50}

\usetikzlibrary{tikzmark}

\newlength{\tokenwidth}
\newtheorem{theorem}{Theorem}
\theoremstyle{definition} % Optional: formats the body in normal font
\newtheorem{definition}{Definition}[section]
\newtheorem{proposition}{Proposition}
\newtheorem{remark}{Remark}

\usepackage{lineno}

\definecolor{darkblue}{rgb}{0, 0, 0.5}
\hypersetup{colorlinks=true, citecolor=darkblue, linkcolor=darkblue, urlcolor=darkblue}

\title{Adaptive Block Diffusion: Resolving Training–Inference \\Mismatch in Diffusion Language Models}

\author{Gagan Jain \thanks{Work done while at Microsoft AI. Email: \texttt{gaganjain1582@gmail.com}}\\
}

\begin{document}

\ifcolmsubmission
\linenumbers
\fi

\maketitle

\begin{abstract}
Diffusion Language Models (DLMs) are typically trained under fixed context structures, restricting denoising to predetermined token subsets. This creates a mismatch between training and inference, where models must operate over arbitrary configurations, leading to degradation off the training grid. We propose \textbf{Adaptive Block Diffusion (ABD)}, which resolves this mismatch by optimizing denoising risk over a distribution of prefix–window configurations. By treating the configuration as a stochastic variable, ABD trains a single model over the full configuration space without architectural changes. We show that generalization across decoding strategies is governed by the support of the training distribution, and that ABD guarantees denoising optimality for any inference policy whose configurations are covered during training. Empirically, ABD exhibits structural invariance across decoding scales, avoiding off-grid collapse and recovering a monotonic relationship between block size and perplexity, while matching or outperforming fixed-block specialists at their target scales.

\end{abstract}

\section{Introduction}
Autoregressive (AR) language models \citep{Radford2018ImprovingLU} generate text sequentially and achieve strong perplexity, but their inference latency scales linearly with sequence length, limiting parallelism. Diffusion-based approaches, originally developed for continuous domains 
\begin{wrapfigure}{r}{0.5\textwidth}
        \centering
        \vspace{-12pt} % Pull figure up to align with text
        \includegraphics[width=0.48\textwidth]{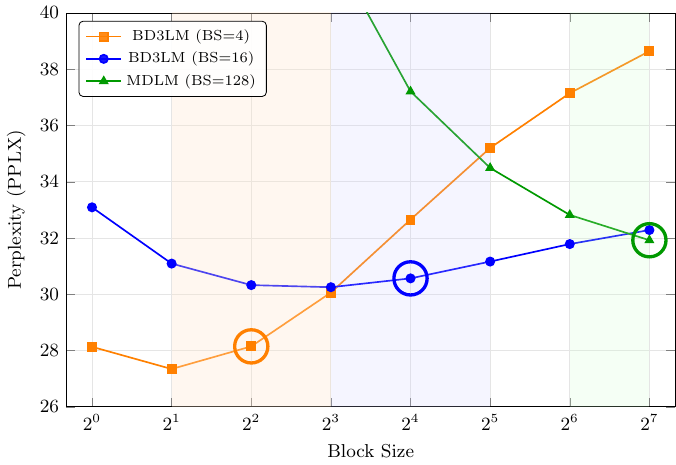}
        \captionof{figure}{\scriptsize \textbf{The Generalization Gap in Fixed-Block Diffusion:} Eval Perplexity as a function of inference block size on LM1B for models trained at fixed block sizes. Fixed-block specialists exhibit optimal performance only near their training support (circled), with sharp perplexity spikes occurring when evaluated at off-grid scales, demonstrating the lack of structural invariance, where models fail to generalize to configurations unseen during training.}
        \label{fig:blocksize-pplx}
        \vspace{-15pt} % Reduce space below figure
    \end{wrapfigure}
\citep{sohldickstein2015deepunsupervisedlearningusing, ho2020denoisingdiffusionprobabilisticmodels, song2022denoisingdiffusionimplicitmodels, dhariwal2021diffusionmodelsbeatgans}, have recently been adapted to discrete text generation \citep{austin2023structureddenoisingdiffusionmodels, hoogeboom2021argmaxflowsmultinomialdiffusion}, offering a complementary paradigm that enables parallel decoding.

To bridge the gap between global parallelism and local precision, recent work has introduced structured diffusion schemes such as Masked Discrete Diffusion Language Models (MDLM) \citep{sahoo2024simpleeffectivemaskeddiffusion} and Block Diffusion \citep{arriola2025blockdiffusioninterpolatingautoregressive}, which restrict denoising to subsets of prefix conditioned tokens. However, these approaches rely on \textit{fixed context structures}, where the set of denoised tokens follows a predetermined pattern throughout training.

This design introduces a fundamental limitation: the model is trained to denoise under a narrow set of configurations, while inference often requires operating over arbitrary prefix lengths and window sizes. This induces a \textit{support mismatch} between training and inference. When evaluated outside their training support, fixed-block models exhibit sharp degradation in perplexity and unstable generation behavior, as shown in \autoref{fig:blocksize-pplx}. These models do not learn a general denoising procedure, but rather specialize to configurations seen during training, leaving their behavior on unseen configurations effectively unconstrained. This motivates us to raise a fundamental question:

\begin{figure*}[t]
  \centering
  \includegraphics[width=\linewidth]{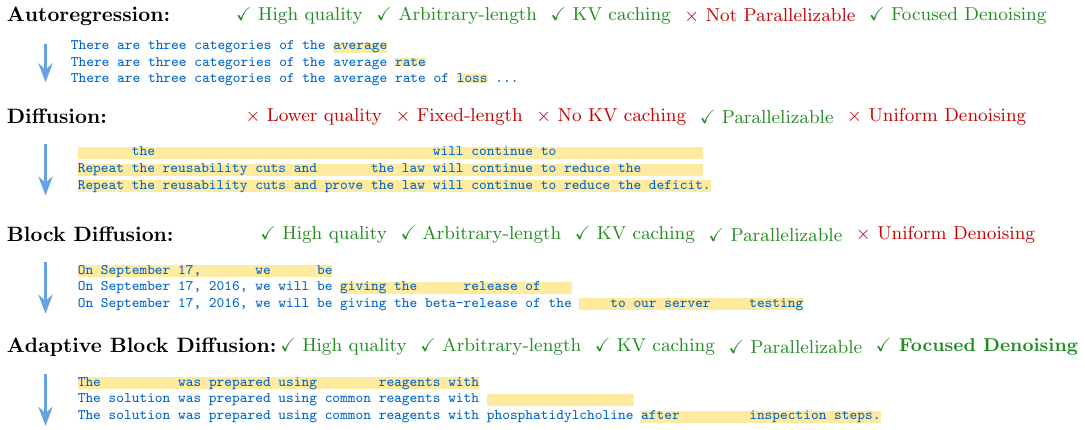}
  \caption{\textbf{Comparison of Text Generation Paradigms:} Autoregressive models generate tokens sequentially, while Standard Diffusion treats sequences globally. Block Diffusion introduces structure but remains restricted to fixed-length, grid-aligned windows. Adaptive Block Diffusion generalizes these frameworks by training on a stochastic distribution of prefix lengths and window sizes, enabling high-quality, parallel generation across arbitrary boundaries while maintaining KV caching compatibility.
  }
  \label{fig:abd-overview}
  \vspace{-17pt}
\end{figure*}

\begin{importantbox}
\vspace{-2pt}
\centering \textit{Can a \textbf{single diffusion language model} be trained to eliminate the mismatch between\\ training and inference, remaining invariant to decoding configurations?}
\vspace{-2pt}
\end{importantbox}

In this paper, we propose \textbf{Adaptive Block Diffusion (ABD)}, a framework that resolves this mismatch by training over the full configuration space of prefix–window pairs. Rather than restricting training to a fixed set of denoising patterns, ABD treats the configuration as a stochastic variable and optimizes denoising risk across a distribution of configurations. Unlike inference-time approaches \citep{lu2026adablockdllmsemanticawarediffusionllm}, which adapt block size post-hoc without training-time guarantees, ABD ensures that the support of inference-time configurations is covered during training, yielding a single model that is \textit{structurally invariant} to decoding scale and boundary choices. Our contributions are as follows:
\vspace{-4pt}
\begin{itemize}[leftmargin=*]
    \setlength\itemsep{1pt}
    
    \item \textbf{The Adaptive Block Diffusion Framework:} We introduce a training framework that treats denoising configurations as stochastic variables, enabling optimization over the full configuration space of prefix–window pairs.

    \item \textbf{Training–Inference Alignment:} We show that ABD guarantees denoising optimality for any inference policy whose configuration distribution is supported during training, formalizing alignment via a Radon–Nikodym argument.

    \item \textbf{Structural Invariance:} We demonstrate that ABD eliminates off-grid degradation and recovers a monotonic relationship between block size and perplexity, in contrast to fixed-block specialists.

    \item \textbf{Unified Model Across Scales:} A single ABD model matches or outperforms fixed-block models at their target configurations while maintaining strong performance across all decoding scales.
\end{itemize}

\section{Preliminaries}

\textbf{Masked Discrete Diffusion Language Models}: Let $\mathcal{V}$ be a finite vocabulary, $\mathcal{D}$ denote the training data
distribution, and $\mathbf{x}_t = (x_t^1,\dots,x_t^L) \in \mathcal{V}^L$ denote a
length-$L$ sequence of discrete tokens at diffusion timestep $t$. Masked discrete
diffusion language models (MDLMs) define a Markov forward process $q$ where
$\mathbf{x}_t \in (\mathcal{V}~\cup~\{\text{MASK}\})^L$ denotes a corrupted version
of $\mathbf{x}_0$ at timestep $t$. The forward transition kernels are typically
chosen to factorize across token positions, with corruption strength increasing
in $t$, $q(\mathbf{x}_{1:T} \mid \mathbf{x}_0) =
\prod_{t=1}^T q_t(\mathbf{x}_t \mid \mathbf{x}_{t-1}) =
\prod_{t=1}^T \prod_{i=1}^L q_t(x_t^i \mid x_{t-1}^i).
$
The conditional distribution $q_t(\cdot \mid \mathbf{x}_0)$ is assumed to have
full support over the masked token for all $t$. The reverse process is
parameterized by a neural model $p_\theta$, which defines conditional
distributions of the form $p_\theta(\mathbf{x}_0 \mid \mathbf{x}_t, t)$. Training minimizes an expected denoising objective that can be written as a sum
of Kullback--Leibler divergences between the true reverse conditionals induced by
the forward process and the model distributions:
\[
\mathbb{E}_{\mathbf{x}_0 \sim \mathcal{D},\;
t \sim \mathcal{U}(\{1,\dots,T\}),\;
\mathbf{x}_t \sim q_t(\cdot \mid \mathbf{x}_0)}
\sum_{i=1}^L
\mathrm{KL}\!\left(
q(x_0^i \mid x_t^i, t)
\;\middle\|\;
p_\theta(x_0^i \mid x_t^i, t)
\right)
\]
where $q(x_0^i \mid x_t^i, t)$ denotes the true reverse conditional associated
with the forward diffusion process. This objective is equivalent to minimizing
the denoising cross-entropy loss up to an additive constant independent of
$\theta$. A key property of this formulation is that, for any fixed diffusion timestep
$t$, denoising is performed in parallel over a fixed set of token positions.
While token values are iteratively refined across timesteps, the index set of
positions being denoised remains unchanged within a diffusion run.

\textbf{Configuration Space Notation}: To formalize the space of denoising configurations, we introduce a prefix length $k$ and a denoising window length $\ell$, and define the \emph{configuration space} as
$\Omega := \{(k,\ell) \mid 0 \le k \le L-1,\; 1 \le \ell \le L-k\}$. For any configuration $(k,\ell) \in \Omega$, we define the set of noised positions as $S(k,\ell) := \{k+1,\dots,k+\ell\}$. A binary noise mask is defined as $m_{k,\ell}(i) := \mathbf{1}\!\left\{ i \in S(k,\ell) \right\},  i=1,\dots,L$, where $\mathbf{1}\{\cdot\}$ denotes the indicator function. This formulation makes explicit that each diffusion objective corresponds to optimizing denoising performance over a subset of the configuration space $\Omega$ during training. 

Given a configuration $(k,\ell)$, tokens indexed by $S(k,\ell)$ are corrupted by the forward diffusion process, while tokens outside this set remain clean and serve as conditioning context. The diffusion denoising process is therefore fully specified by the tuple $(x_0, k, \ell, t)$. This formulation subsumes a broad class of structured diffusion procedures in
which denoising is restricted to a contiguous suffix conditioned on a clean
prefix. Specific instantiations correspond to imposing additional constraints
on the configuration space $\Omega$.

\textbf{Block Diffusion and Semi-Autoregressive Decoding}: Block diffusion corresponds to a particular restriction of the configuration
space defined above. Specifically, training is performed on configurations of the form $\Omega_{\mathrm{BD}}=
\{(k,\ell) \in \Omega \mid \ell = L',\; k \in bL', b \in \mathbb{Z}^+\}$, where the denoising window length is fixed to a block size $L'$ and the prefix
length advances in increments of $L'$. Each diffusion run denoises a fixed-length window following a block-aligned prefix. As a result, the model is only trained to denoise under configurations in $\Omega_{\mathrm{BD}}$, leaving its behavior on the remainder of $\Omega$ unconstrained by the training objective. This perspective motivates training objectives that optimize denoising risk over broader subsets of $\Omega$, which we develop in the following section as Adaptive Block Diffusion.

\section{Methodology: Adaptive Block Diffusion}

\label{sec:abd-obj}
We now introduce \emph{Adaptive Block Diffusion} (ABD), which resolves the training–inference mismatch by optimizing denoising risk over the full configuration space of prefix-window pairs. Rather than restricting training to a fixed block-aligned subset, ABD treats the configuration $(k,\ell) \in \Omega$ as a stochastic variable and directly optimizes denoising risk over this space. To this end, we define a training distribution $\pi$ over the configuration space $\Omega$. Given a configuration $(k,\ell) \sim \pi$, diffusion is applied only to tokens indexed by $S(k,\ell)$, while tokens outside this set remain clean and serve as conditioning context.

Formally, the ABD training objective is defined as
\[
\mathcal{L}_{\mathrm{ABD}}(\theta)
=
\mathbb{E}_{{{(k,\ell)\sim\pi}},\;x_0 \sim \mathcal{D},\;
t \sim \mathcal{U}(\{1,\dots,T\}),\;
\mathbf{x}_t \sim q_t(\cdot \mid \mathbf{x}_0)}
\sum_{{{i \in S(k,\ell)}}}
\mathrm{KL}\!\left(
q(x_0^i \mid x_t^i, t)
\;\middle\|\;
p_\theta(x_0^i \mid x_t^i, \mathbf{x}_0^{1:k}, t)
\right)
\]

This objective can be interpreted as minimizing expected denoising risk over the configuration space $\Omega$ during training under the distribution $\pi$. Thus, existing diffusion objectives can be viewed as special cases of $\mathcal{L}_{\mathrm{ABD}}$, corresponding to restricted choices of $\pi$ with limited support over $\Omega$. This objective reduces to the standard masked diffusion objective when
$\ell = L$ and to block diffusion when $\pi$ is supported only on configurations
with fixed window length $\ell = L'$ and block-aligned prefix lengths
$k \in \{0, L', 2L', \dots\}$. Importantly, ABD does not require any architectural
changes to the diffusion model; it modifies only the distribution over
configurations encountered during training. 

By training over configurations with varying prefix lengths and denoising
window sizes, ABD exposes the model to the full range of contexts that may arise
under adaptive or sliding-window decoding strategies. This enables the model to learn denoising behavior that generalizes across arbitrary inference-time policies that select $(k,\ell)$ dynamically. Any distribution $\pi$ with support over $\Omega$ is valid in principle. In our experiments, we consider simple choices that factorize over prefix length and
window length, sampling $k$ and $\ell$ independently. Algorithm~\ref{alg:abd-training} summarizes the resulting training procedure, which is identical to standard diffusion except for the distribution over configurations.

Importantly, the theoretical properties of ABD derived in the following section
do not depend on a specific choice of $\pi$, but rather on its support. As long as $\pi$ assigns nonzero probability to configurations encountered at inference time, the corresponding denoising risks are optimized during training, ensuring that inference-time configurations are not out-of-distribution.

\begin{algorithm}[t]
\caption{Adaptive Block Diffusion Training}
\label{alg:abd-training}
{\small\begin{algorithmic}[1]
\REQUIRE Training data dist. $\mathcal{D}$, block config dist. $\pi$ over $(k, \ell)$ pairs, model parameters $\theta$
\vspace{2pt}
\hrule height 0.3pt
\vspace{4pt}
\WHILE{not converged}
    \STATE Sample a training sequence $\mathbf{x}_0 \sim \mathcal{D}$
    \STATE Sample a block configuration $(k,\ell) \sim \pi$ defining the active block $S(k,\ell)$
    \STATE Sample a diffusion timestep $t \sim \mathcal{U}(\{1,\dots,T\})$ and corrupt the sequence to obtain $\mathbf{x}_t \sim q_t(\cdot \mid \mathbf{x}_0)$
    \vspace{-9pt}
    \STATE Compute ABD loss, denoising positions in active block while conditioning on prefix $\mathbf{x}_0^{1:k}$:
    \vspace{-5pt}
    \[
    \mathcal{L}_{\mathrm{ABD}}(\theta)
    =
    \sum_{i \in S(k,\ell)}
    \mathrm{KL}\!\left(
    q(x_0^i \mid x_t^i, t)
    \;\middle\|\;
    p_\theta(x_0^i \mid x_t^i, \mathbf{x}_0^{1:k}, t)
    \right)
    \]
    \vspace{-10pt}
    \STATE Update $\theta$ by taking a gradient step on $\mathcal{L}_{\mathrm{ABD}}(\theta)$
\ENDWHILE
\end{algorithmic}}
\end{algorithm}

\section{Theoretical Analysis}

We analyze the Adaptive Block Diffusion (ABD) objective introduced in Section~\ref{sec:abd-obj} and establish formal guarantees regarding denoising optimality and training--inference alignment. Our analysis treats diffusion as a conditional risk minimization problem over the configuration space $\Omega$, revealing that the generalization behavior of diffusion models is governed by the support of the training distribution over configurations.

\subsection{Risk over Configuration Space}

We formalize diffusion training as risk minimization over the configuration space $\Omega$. For a given configuration $(k,\ell) \in \Omega$, recall that denoising is performed over the index set $S(k,\ell) = \{k+1,\dots,k+\ell\}$, conditioned on
the clean prefix $\mathbf{x}_0^{1:k}$. We define the conditional denoising risk
of a model $p_\theta$ at configuration $(k,\ell)$ as
\[
\mathcal{R}(\theta; k,\ell)
=
\mathbb{E}_{\mathbf{x}_0 \sim \mathcal{D},\;
t \sim \mathcal{U}(\{1,\dots,T\}),\;
\mathbf{x}_t \sim q_t(\cdot \mid \mathbf{x}_0)}
\Bigg[
\sum_{i \in S(k,\ell)}
\mathrm{KL}\!\left(
q(x_0^i \mid x_t^i, t)
\;\middle\|\;
p_\theta(x_0^i \mid x_t^i, \mathbf{x}_0^{1:k}, t)
\right)
\Bigg].
\]

Given a training distribution $\pi$ over the configuration space $\Omega$, the
expected denoising risk is defined as
\[
\mathcal{R}_\pi(\theta)
=
\mathbb{E}_{(k,\ell)\sim\pi}\big[
\mathcal{R}(\theta; k,\ell)
\big].
\]

By construction, minimizing $\mathcal{R}_\pi(\theta)$ is equivalent to minimizing
the Adaptive Block Diffusion objective defined in Section~\ref{sec:abd-obj}. Thus, training a
diffusion model corresponds to minimizing expected denoising risk over configurations
sampled from $\pi$, making the support of $\pi$ the key factor governing
generalization across configurations.

\subsection{Consistency of Adaptive Block Diffusion}

We first establish that ABD is statistically consistent over the support of the
training configuration distribution.

\begin{theorem}[Optimality of ABD over Configuration Space]
\label{thm:abd-optimality}
Let $\mathcal H = \{p_\theta : \theta \in \Theta\}$ be a hypothesis class of
conditional distributions. Then any global minimizer $\theta^\star \in \arg\min_{\theta \in \Theta} \mathcal{R}_\pi(\theta)$ satisfies
\[
\mathcal{R}(\theta^\star; k,\ell)
=
\inf_{\theta \in \Theta}
\mathcal{R}(\theta; k,\ell)
\quad \text{for $\pi$-almost every $(k,\ell) \in \Omega$.}
\]
\end{theorem}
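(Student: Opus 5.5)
The plan is to reduce the statement to a pointwise inequality between nonnegative quantities whose $\pi$-average vanishes. For each $(k,\ell)\in\Omega$ let $r^\star(k,\ell) := \inf_{\theta\in\Theta}\mathcal{R}(\theta;k,\ell)$. Every summand of $\mathcal{R}(\theta;k,\ell)$ is a KL divergence, so $0 \le r^\star(k,\ell) \le \mathcal{R}(\theta;k,\ell)$ for every $\theta$ and every configuration. Define the excess risk $\Delta(\theta;k,\ell) := \mathcal{R}(\theta;k,\ell) - r^\star(k,\ell)$, which is nonnegative. Because $\Omega$ is finite, integrability is automatic once the risks are finite, and ``$\pi$-almost every'' simply means ``every $(k,\ell)$ with $\pi(k,\ell)>0$''.

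The key step is to compare $\theta^\star$ with a \emph{simultaneous} minimizer. Suppose there is some $\theta^\circ\in\Theta$ with $\mathcal{R}(\theta^\circ;k,\ell) = r^\star(k,\ell)$ for all $(k,\ell)\in\operatorname{supp}\pi$. Global optimality of $\theta^\star$ then gives
\[
\mathbb{E}_{(k,\ell)\sim\pi}\big[\mathcal{R}(\theta^\star;k,\ell)\big]
= \mathcal{R}_\pi(\theta^\star)
\le \mathcal{R}_\pi(\theta^\circ)
= \mathbb{E}_{(k,\ell)\sim\pi}\big[r^\star(k,\ell)\big].
\]
Hence $\mathbb{E}_\pi[\Delta(\theta^\star;k,\ell)] \le 0$. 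Since $\Delta \ge 0$, it follows that $\Delta(\theta^\star;k,\ell)=0$ for every atom of $\pi$, which is the claim. With finite $\Omega$ this is just the statement that a nonnegative finite sum equals zero only if every term is zero.

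The main obstacle is justifying that such a $\theta^\circ$ exists. Without it the statement can fail: a restricted class may trade off risk between configurations, so that the $\pi$-average minimizer is not optimal at each configuration separately. I would make explicit the assumption implicit in the theorem, namely that $\mathcal{H}$ is rich enough that the per-configuration problems decouple. The most natural route uses the structure of the model. $p_\theta$ receives the clean prefix $\mathbf{x}_0^{1:k}$ and the corrupted positions, so the configuration $(k,\ell)$ is identifiable from its input. The KL risk at each configuration is minimized exactly by the true reverse conditional $q(x_0^i\mid x_t^i,\mathbf{x}_0^{1:k},t)$, so a sufficiently expressive class (e.g.\ one containing all conditionals, or closed under piecewise combination across distinct inputs) contains a single $\theta^\circ$ matching the Bayes-optimal conditional on every configuration at once. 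I would state this as a well-specification or expressivity hypothesis and verify the decoupling: the risks at different $(k,\ell)$ depend on $p_\theta$ only through disjoint input patterns, since the masking pattern reveals $k$ and $\ell$. If the infima are not attained, I would run the same argument with an $\varepsilon$-simultaneous minimizer $\theta^\circ_\varepsilon$, obtain $\mathbb{E}_\pi[\Delta(\theta^\star;\cdot)]\le\varepsilon$ for every $\varepsilon>0$, and conclude in the same way.
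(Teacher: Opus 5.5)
Your argument is correct, and its core is the same averaging idea as the paper's. The excess risk $\mathcal{R}(\theta^\star;k,\ell)-\inf_{\theta}\mathcal{R}(\theta;k,\ell)$ is nonnegative, so it vanishes $\pi$-a.e.\ once its $\pi$-average is zero. The difference is that you state explicitly the hypothesis that forces this average to be zero, and the paper does not. Its appendix proof integrates a uniform gap $\varepsilon$ over $A$ to get $\mathcal{R}_\pi(\theta^\star)\ge\int_\Omega\inf_{\theta}\mathcal{R}(\theta;k,\ell)\,d\pi+\varepsilon\,\pi(A)$, and calls this a contradiction with the optimality of $\theta^\star$. But optimality only gives $\mathcal{R}_\pi(\theta^\star)=\inf_{\theta}\int_\Omega\mathcal{R}(\theta;k,\ell)\,d\pi$. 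That quantity can be strictly larger than $\int_\Omega\inf_{\theta}\mathcal{R}(\theta;k,\ell)\,d\pi$. The contradiction therefore needs exactly the interchange of infimum and integral that your (possibly $\varepsilon$-approximate) simultaneous minimizer $\theta^\circ$ supplies.

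You are right that the statement fails without this hypothesis. Take $L=2$, vocabulary $\{a,b\}$, and data with $x_0^1=a$ and $x_0^2=b$ almost surely. Let $\pi$ be uniform on $c_1=(0,1)$ and $c_2=(1,1)$, and take the input-ignoring class $p_\theta(a)=\theta$, $p_\theta(b)=1-\theta$ for $\theta\in(0,1)$. Every target at position $1$ is $\delta_a$ and at position $2$ is $\delta_b$, so $\mathcal{R}(\theta;c_1)=-\log\theta$ and $\mathcal{R}(\theta;c_2)=-\log(1-\theta)$. Both infima are $0$, yet the unique minimizer $\theta^\star=\tfrac12$ of $\mathcal{R}_\pi$ has risk $\log 2$ at both configurations.

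More precisely, when a global minimizer exists and the risks are finite, the theorem's conclusion is equivalent to $\min_{\theta}\mathcal{R}_\pi(\theta)=\mathbb{E}_{\pi}[\inf_{\theta}\mathcal{R}(\theta;k,\ell)]$. On finite $\Omega$, that equality is in turn equivalent to your $\varepsilon$-simultaneous-minimizer condition. So your added hypothesis is necessary, not merely convenient.

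One small correction to your expressivity sketch. Under the paper's definition of $\mathcal{R}$, the first argument of each KL is $q(x_0^i\mid x_t^i,t)$, not a prefix-conditioned reverse conditional. Also, in the paper's notation the model inputs at a shared position coincide for $(k,\ell)$ and $(k,\ell')$, so the input patterns are not literally disjoint. Neither point hurts you. What you actually need is that each per-position target is a function of the model's input, which holds here. Hence any class containing all such conditionals attains zero risk at every configuration simultaneously.
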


\noindent
This result shows that ABD achieves optimal denoising performance on all configurations within the support of the training distribution. Configurations outside this support are not constrained by the objective, highlighting the central role of support coverage in determining generalization.

\subsection{Training--Inference Alignment}

We now formalize inference-time configuration policies and show that ABD
guarantees denoising optimality under arbitrary such policies, provided their
support is covered during training.

\begin{definition}[Inference Policy]
An inference policy induces a (possibly stochastic) sequence of configurations
$\{(k_t,\ell_t)\}_{t \ge 0}$ during generation. Let
$\Pi_{\mathrm{inf}}$ denote the distribution over configurations induced by this
policy.
\end{definition}

\begin{theorem}[Training--Inference Alignment]
\label{thm:alignment}
Let $\pi$ be the training configuration distribution and
$\Pi_{\mathrm{inf}}$ be the configuration distribution induced by an inference
policy. Suppose that $\Pi_{\mathrm{inf}} \ll \pi$.

Then, over the same hypothesis class $\mathcal{H}$, any global minimizer $\theta^\star \in \arg\min_{\theta \in \Theta} \mathcal{R}_\pi(\theta)$ is also a global minimizer of the inference risk
\[
\mathcal{R}_{\mathrm{inf}}(\theta)
=
\mathbb{E}_{(k,\ell)\sim\Pi_{\mathrm{inf}}}
\big[
\mathcal{R}(\theta; k,\ell)
\big]
\]
\end{theorem}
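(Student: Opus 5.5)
The plan is to reduce the statement to Theorem~\ref{thm:abd-optimality} and then transfer its ``$\pi$-almost everywhere'' conclusion to $\Pi_{\mathrm{inf}}$ using absolute continuity. Since $\Omega=\{(k,\ell)\mid 0\le k\le L-1,\ 1\le \ell\le L-k\}$ is finite, every measure on it is a finite weighted sum of point masses. This removes all measurability concerns and makes $\Pi_{\mathrm{inf}}\ll\pi$ equivalent to the pointwise condition $\Pi_{\mathrm{inf}}(k,\ell)>0 \Rightarrow \pi(k,\ell)>0$. Each $\mathcal{R}(\theta;k,\ell)$ is a nonnegative expectation of KL divergences, so both $\mathcal{R}_\pi$ and $\mathcal{R}_{\mathrm{inf}}$ are well-defined sums in $[0,\infty]$.

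\textbf{Step 1.} Fix a global minimizer $\theta^\star\in\arg\min_\theta\mathcal{R}_\pi(\theta)$. By Theorem~\ref{thm:abd-optimality}, the set
\[
N=\{(k,\ell)\in\Omega : \mathcal{R}(\theta^\star;k,\ell)>\inf_{\theta}\mathcal{R}(\theta;k,\ell)\}
\]
satisfies $\pi(N)=0$. Because $\Omega$ is finite, $N$ is automatically measurable.

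\textbf{Step 2.} Absolute continuity $\Pi_{\mathrm{inf}}\ll\pi$ then gives $\Pi_{\mathrm{inf}}(N)=0$. Hence $\mathcal{R}(\theta^\star;k,\ell)=\inf_\theta\mathcal{R}(\theta;k,\ell)$ on every configuration charged by the inference policy.

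\textbf{Step 3.} For an arbitrary $\theta\in\Theta$, bound the inference risk termwise:
\[
\mathcal{R}_{\mathrm{inf}}(\theta)=\sum_{(k,\ell)}\Pi_{\mathrm{inf}}(k,\ell)\,\mathcal{R}(\theta;k,\ell)\;\ge\;\sum_{(k,\ell)}\Pi_{\mathrm{inf}}(k,\ell)\inf_{\theta'}\mathcal{R}(\theta';k,\ell).
\]
By Step~2, the right-hand side equals $\sum_{(k,\ell)}\Pi_{\mathrm{inf}}(k,\ell)\,\mathcal{R}(\theta^\star;k,\ell)=\mathcal{R}_{\mathrm{inf}}(\theta^\star)$. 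The terms with $(k,\ell)\in N$ carry zero weight and do not affect either sum. This yields $\mathcal{R}_{\mathrm{inf}}(\theta^\star)\le\mathcal{R}_{\mathrm{inf}}(\theta)$ for all $\theta$, so $\theta^\star$ is a global minimizer of $\mathcal{R}_{\mathrm{inf}}$. Incidentally, this lower bound also shows that $\mathcal{R}_{\mathrm{inf}}(\theta^\star)$ equals the pointwise-infimum value $\mathbb{E}_{\Pi_{\mathrm{inf}}}[\inf_\theta\mathcal{R}(\theta;k,\ell)]$, which is the strongest form of alignment one could ask for.

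\textbf{Main obstacle.} The substantive content sits entirely in Theorem~\ref{thm:abd-optimality}, which I take as given. That theorem implicitly needs the hypothesis class to admit a single parameter that attains all per-configuration infima simultaneously on the support of $\pi$. This holds, for instance, when $p_\theta$ conditions on $(k,\ell)$ through $\mathbf{x}_0^{1:k}$ richly enough that the per-configuration problems decouple; otherwise a minimizer of $\mathcal{R}_\pi$ could trade risk off across configurations. I would flag this dependence explicitly but not re-prove it. The only care needed beyond that is the infinite-risk edge case. If some term has $\inf_\theta\mathcal{R}(\theta;k,\ell)=\infty$ with positive $\Pi_{\mathrm{inf}}$-mass, then every $\theta$ has $\mathcal{R}_{\mathrm{inf}}(\theta)=\infty$, and the claim holds trivially. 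So the chain of inequalities in $[0,\infty]$ remains valid in all cases.
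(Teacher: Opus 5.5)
Your proposal is correct and follows the paper's own proof: you define the same set $N$, obtain $\pi(N)=0$ from Theorem~\ref{thm:abd-optimality}, get $\Pi_{\mathrm{inf}}(N)=0$ by absolute continuity, and compare integrands off $N$; the one difference is that you use the finiteness of $\Omega$ to work with point masses, where the paper passes through a Radon--Nikodym density $f=d\Pi_{\mathrm{inf}}/d\pi$ that the argument does not actually need. Your caveat is also accurate: both proofs rest on Theorem~\ref{thm:abd-optimality}, whose conclusion requires a single parameter attaining all per-configuration infima on the support of $\pi$, and for hypothesis classes that force trade-offs across configurations that theorem and the present statement can both fail.
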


\noindent
Theorem~\ref{thm:alignment} shows that as long as the support of the inference
distribution is contained within the support of the training distribution,
denoising optimality transfers from training to inference. This result shows that generalization across decoding configurations is not a property of the model architecture, but of the support of the training distribution over $\Omega$.

\begin{remark}[Support mismatch and density effects]
If $\Pi_{\mathrm{inf}} \not\ll \pi$, the inference risk decomposes into
in-distribution and out-of-distribution components, where the latter corresponds
to configurations unseen during training and is unconstrained by the objective.
When $\Pi_{\mathrm{inf}} \ll \pi$, the inference risk can be written as a
reweighted expectation under $\pi$, where large density ratios amplify errors in
configurations that are underrepresented during training. This highlights the
importance of balanced configuration coverage.
\end{remark}

\subsection{Limitation of Block Diffusion}

We contrast the above guarantees with those of standard block diffusion, which
restricts training to a strict subset of the configuration space $\Omega$.

\begin{proposition}[Limitation of Block Diffusion]
\label{prop:block-limitation}
Let $\pi_{\mathrm{BD}}$ be the training configuration distribution for block
diffusion, supported on a strict subset
$\Omega_{\mathrm{BD}} \subsetneq \Omega$. Let
$\theta^\star \in \arg\min_{\theta \in \Theta} \mathcal{R}_{\pi_{\mathrm{BD}}}(\theta)$ be a global minimizer of the block diffusion objective.

Then, for any configuration $(k,\ell) \notin \Omega_{\mathrm{BD}}$, the value of
$\mathcal{R}(\theta^\star; k,\ell)$ is unconstrained by the training objective.
Consequently, if $\Pi_{\mathrm{inf}} \not\ll \pi_{\mathrm{BD}}$, block diffusion
provides no guarantee of denoising optimality under the inference policy.
\end{proposition}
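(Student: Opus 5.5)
To prove Proposition~\ref{prop:block-limitation} I would make ``unconstrained'' precise. The objective $\mathcal{R}_{\pi_{\mathrm{BD}}}(\theta)=\sum_{(k,\ell)\in\Omega_{\mathrm{BD}}}\pi_{\mathrm{BD}}(k,\ell)\,\mathcal{R}(\theta;k,\ell)$ depends on $\theta$ only through the conditional risks at configurations in $\Omega_{\mathrm{BD}}$, since $\Omega$ is finite and $\pi_{\mathrm{BD}}$ vanishes off $\Omega_{\mathrm{BD}}$. Fix $(k',\ell')\notin\Omega_{\mathrm{BD}}$. I would argue that the set $\arg\min_\theta \mathcal{R}_{\pi_{\mathrm{BD}}}(\theta)$ is characterized by Theorem~\ref{thm:abd-optimality} (applied with $\pi=\pi_{\mathrm{BD}}$) purely through the values $\{\mathcal{R}(\theta;k,\ell)\}_{(k,\ell)\in\Omega_{\mathrm{BD}}}$. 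Hence $\mathcal{R}(\theta^\star;k',\ell')$ does not enter the optimality conditions, and no bound on it follows from $\theta^\star$ being a minimizer.

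To give this content, I would exhibit the non-identifiability with a sufficiently rich class $\mathcal{H}$, for instance one containing all conditional distributions, or at least closed under modifying the model's output on inputs whose noise mask matches $m_{k',\ell'}$. The key observation is that, conditioned on $(k,\ell)$, the model's input is the pair consisting of the corrupted sequence and the clean prefix. The mask pattern, meaning which positions may be noised and the prefix length $k$, differs between $(k',\ell')$ and every $(k,\ell)\in\Omega_{\mathrm{BD}}$. Given a minimizer $\theta^\star$, I would therefore build $\theta^\dagger$ that agrees with $p_{\theta^\star}$ on all inputs arising from $\Omega_{\mathrm{BD}}$ but outputs an arbitrary fixed distribution (say uniform) at the positions $S(k',\ell')$ for inputs of configuration $(k',\ell')$.

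Then $\mathcal{R}_{\pi_{\mathrm{BD}}}(\theta^\dagger)=\mathcal{R}_{\pi_{\mathrm{BD}}}(\theta^\star)$, so $\theta^\dagger$ is also a global minimizer, while $\mathcal{R}(\theta^\dagger;k',\ell')$ can be driven to any value between the optimum and arbitrarily large. It becomes arbitrarily large by letting the output put vanishing mass on tokens in the support of $q(x_0^i\mid x_t^i,t)$, which makes the KL divergence unbounded. This shows the training objective places no constraint on the risk at $(k',\ell')$.

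For the consequence, suppose $\Pi_{\mathrm{inf}}\not\ll\pi_{\mathrm{BD}}$. Then some $(k',\ell')$ has $\Pi_{\mathrm{inf}}(k',\ell')>0$ and $\pi_{\mathrm{BD}}(k',\ell')=0$, so $(k',\ell')$ lies outside the support $\Omega_{\mathrm{BD}}$. Using the construction above, $\mathcal{R}_{\mathrm{inf}}(\theta^\dagger)\ge \Pi_{\mathrm{inf}}(k',\ell')\,\mathcal{R}(\theta^\dagger;k',\ell')$ can exceed $\inf_\theta\mathcal{R}_{\mathrm{inf}}(\theta)$, even though $\theta^\dagger$ minimizes the block diffusion objective. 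The positive weight $\Pi_{\mathrm{inf}}(k',\ell')$ is what lets the uncontrolled term spoil inference-time optimality. This contrasts directly with Theorem~\ref{thm:alignment}.

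The main obstacle is the richness assumption on $\mathcal{H}$. For a restricted parametric class with shared weights, changing outputs on $(k',\ell')$ inputs may change them on $\Omega_{\mathrm{BD}}$ inputs as well, so ``unconstrained'' cannot be read as ``arbitrary.'' I would handle this by stating the result as non-identifiability: the objective imposes no condition involving $\mathcal{R}(\cdot;k',\ell')$, and the separating construction applies whenever $\mathcal{H}$ can distinguish mask patterns, which holds for transformers that receive the mask or positional information.
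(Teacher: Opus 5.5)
Your proof is correct. It shares the paper's backbone but goes a genuine step further.

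\textbf{The shared backbone.} Because $\pi_{\mathrm{BD}}$ vanishes off $\Omega_{\mathrm{BD}}$, the objective depends on $\theta$ only through $\mathcal{R}(\theta;k,\ell)$ for $(k,\ell)\in\Omega_{\mathrm{BD}}$. Also, $\Pi_{\mathrm{inf}}\not\ll\pi_{\mathrm{BD}}$ yields an off-support configuration carrying positive inference mass. The paper stops there, reading ``unconstrained'' as ``absent from the objective'' and ``no guarantee'' as ``not optimized by training.''

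\textbf{What you add.} You construct a second minimizer $\theta^\dagger$ that agrees with $\theta^\star$ on every $\Omega_{\mathrm{BD}}$-input but is arbitrarily bad at $(k',\ell')$. This turns both claims into existence statements: the block-diffusion argmin contains parameters with arbitrarily large off-support risk and strictly suboptimal inference risk.

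\textbf{What it costs.} You need a separability/richness hypothesis on $\mathcal{H}$ that the paper never states, and you are right that it is essential rather than cosmetic. For a singleton $\mathcal{H}$, the unique parameter is trivially inference-optimal. Even for a rich class, the conclusion fails if the model's input does not reveal $\ell$. Under the paper's literal conditioning $p_\theta(x_0^i\mid x_t^i,\mathbf{x}_0^{1:k},t)$, take a block-aligned $(k,\ell')$ with $\ell'<L'$. Its risk consists of exactly the same per-position terms as part of $\mathcal{R}(\theta;k,L')$, so it is minimized by training after all.

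\textbf{Comparison.} The paper's argument is assumption-free but only shows that the objective is blind to off-support risks. Yours shows that this blindness is actually realized by some minimizer.

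\textbf{Two small cleanups.}
\begin{enumerate}
\item You do not need Theorem~\ref{thm:abd-optimality}. The equalities $\mathcal{R}(\theta^\dagger;k,\ell)=\mathcal{R}(\theta^\star;k,\ell)$ on $\Omega_{\mathrm{BD}}$ give $\theta^\dagger\in\arg\min_{\theta}\mathcal{R}_{\pi_{\mathrm{BD}}}(\theta)$ directly. Moreover, that theorem's contradiction step presupposes $\inf_\theta\mathcal{R}_\pi(\theta)=\int\inf_\theta\mathcal{R}(\theta;k,\ell)\,d\pi(k,\ell)$, which is the same kind of richness assumption, so it is better not to lean on it.
\item In the final comparison, note explicitly that $\inf_\theta\mathcal{R}_{\mathrm{inf}}(\theta)<\infty$. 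For example, a uniform predictor has KL at most $\log|\mathcal{V}|$ per position. This is what lets the blown-up term actually exceed the infimum.
\end{enumerate}
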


\noindent
Proposition~\ref{prop:block-limitation} formalizes the failure mode of fixed-block
diffusion: by restricting training to $\Omega_{\mathrm{BD}}$, these models leave
denoising behavior on unseen configurations unconstrained. When inference
procedures induce configurations outside this support, performance can degrade significantly, explaining the empirical instability observed off the training grid.

\begin{remark}
The condition $\Pi_{\mathrm{inf}} \ll \pi_{\mathrm{BD}}$ requires that inference
induce only block-aligned configurations with fixed window length. In practice,
common decoding strategies, including sliding-window generation, streaming
decoding, and adaptive window schedules, violate this condition, motivating the
use of training objectives with broader configuration support.
\end{remark}

\section{Experiments}

We evaluate ABD on two standard language modeling benchmarks: the One Billion Word corpus (LM1B; \cite{chelba2014billionwordbenchmarkmeasuring}) and OpenWebText (OWT; \cite{Gokaslan2019OpenWeb}). We use a transformer-based architecture of identical size and configuration to MDLM \citep{sahoo2024simpleeffectivemaskeddiffusion} and BD3LM \citep{arriola2025blockdiffusioninterpolatingautoregressive}, which share the same underlying architecture. This ensures that differences in performance are attributable to training objective and configuration distribution rather than model capacity. Our primary ABD model is initialized from a checkpoint pretrained with the full sequence length MDLM objective for 250k gradient steps, followed by finetuning for 750k steps under the categorical exponential configuration distribution, with probability proportional to $2^{-k}$ for the $k^{th}$ block size in $\{1, 2, 4, \dots 128\}$, normalized to sum to 1. We compare against autoregressive (AR), MDLM, SEDD, and BD3LM at block sizes 4 and 16 as baselines. Our experiments are designed to test the central theoretical prediction: that generalization across decoding configurations is governed by the support of the training distribution over $\Omega$.

\subsection{Structural Invariance}

Table~\ref{tab:main_results} reports perplexity across all inference block sizes for all models on LM1B and OWT. The results reveal a direct consequence of training on restricted configuration support. Fixed-block models optimize denoising performance only on a narrow subset of $\Omega$, and therefore fail to generalize to configurations outside their training support.

\begin{table}[b]
\centering
\setlength{\tabcolsep}{3pt}
\caption{Perplexity across inference block sizes on LM1B and OWT. BD3LM specialists exhibit sharp degradation at off-grid configurations. In parity with BD3LM, ABD is trained on 65B tokens on LM1B and 524B tokens on OWT. \textbf{Bold}: best result per column, \underline{underline}: second best.}
\vspace{-5pt}
\label{tab:main_results}
\footnotesize
\begin{tabular}{lcccccccc|cccc}
\toprule
& \multicolumn{8}{c|}{\textbf{LM1B}} & \multicolumn{4}{c}{\textbf{OWT}} \\
\cmidrule(lr){2-9} \cmidrule(lr){10-13}
\textbf{Model} & \textbf{1} & \textbf{2} & \textbf{4} & \textbf{8} & \textbf{16} & \textbf{32} & \textbf{64} & \textbf{128} & \textbf{1} & \textbf{4} & \textbf{16} & \textbf{1024} \\
\midrule
AR        & \textbf{22.88} & - & - & - & - & - & - & - & \textbf{17.54} & - & -  & -  \\
SEDD      & - & - & - & - & - & - & - & 32.68 & - & - & -  & \underline{24.10} \\
MDLM      & 106.56 & 74.16 & 52.48 & 42.35 & 37.21 & 34.49 & \underline{32.82} & \textbf{31.92} & - & - & - & \textbf{22.98} \\
BD3LM (BS=4)  & 28.13 & \underline{27.34} & \underline{28.15} & \underline{30.06} & 32.65 & 35.20 & 37.15 & 38.64 & - & \textbf{20.73} & - & - \\
BD3LM (BS=16) & 33.09 & 31.09 & 30.33 & 30.25 & \textbf{30.56} & \textbf{31.16} & \textbf{31.78} & \underline{32.28} &  - & - & \textbf{22.27} & -\\
\midrule
ABD (ours) & \underline{24.76} & \textbf{25.86} & \textbf{27.50} & \textbf{29.30} & \underline{31.04} & \underline{32.46} & 33.31 & 33.73 & \underline{19.57} & \underline{21.03} & \underline{22.56} & 25.48 \\
\bottomrule
\end{tabular}
\end{table}

A well-behaved generative model should exhibit monotonically improving perplexity as block size decreases, reflecting increasing conditional precision. This monotonic relationship is a structural property of the underlying language modeling task, and should hold independently of the training configuration. Smaller blocks denoise fewer tokens in parallel, approaching the precision of autoregressive generation. Fixed-block specialists violate this structure outside their training support, collapsing in both directions: BD3LM degrades sharply at block sizes larger than its training configuration, while MDLM collapses at small block sizes. Proposition~\ref{prop:block-limitation} explains this: configurations not seen during training are unconstrained by the objective, leading to degraded performance when evaluated off-grid.

ABD, by contrast, recovers the correct monotonic relationship across the full configuration range, without any off-grid degradation. This behavior is consistent with Theorem~\ref{thm:alignment}: by training over a distribution with broad support, ABD ensures that all inference-time configurations are optimized during training. Crucially, this structural invariance does not come at the cost of per-scale quality: ABD matches or outperforms fixed-block specialists at their home configurations, demonstrating that multi-scale training regularizes rather than compromises performance. On OWT, ABD remains within 0.3 perplexity points of specialists at their home configurations while maintaining the correct monotonic trend across the full range. Across both benchmarks, ABD is the only model that jointly achieves competitive quality at every individual scale and correctly captures the structural relationship between block size and generation difficulty.

\subsection{Zero-Shot Generalization}
Having established cross-scale robustness on in-distribution benchmarks, we ask whether ABD generalizes beyond its training distribution. Table~\ref{tab:zero_shot} reports zero-shot perplexity on Penn Treebank \citep{marcus-etal-1993-building}, Wikitext \citep{merity2018analysisneurallanguagemodeling}, LM1B, Lambada \citep{paperno2016lambadadatasetwordprediction}, AG News \citep{zhang2016characterlevelconvolutionalnetworkstext}, and Scientific Papers \citep{cohan2018discourseawareattentionmodelabstractive}, following the evaluation protocol of \citet{arriola2025blockdiffusioninterpolatingautoregressive}. We report ABD at BS=1 and BS=4, representing the quality ceiling and a moderately parallel operating point respectively.

\begin{table}[t]
\centering
\setlength{\tabcolsep}{4pt}
\caption{Zero-shot perplexity ($\downarrow$) on held-out benchmarks. All models are trained on 
OWT. Baseline numbers are taken from 
\citet{arriola2025blockdiffusioninterpolatingautoregressive}. \textbf{Bold}: best result per 
column, \underline{underline}: second best.}
\vspace{-5pt}
\label{tab:zero_shot}
\footnotesize
\begin{tabular}{lcccccccc}
\toprule
\textbf{Model} & \textbf{PTB} & \textbf{Wikitext} & \textbf{LM1B} & \textbf{Lambada} & \textbf{AG News} & \textbf{Pubmed} & \textbf{Arxiv} \\
\midrule
AR              & \textbf{81.07} & \textbf{25.32} & \textbf{51.14} & 52.13 & \textbf{52.11} & 48.59 & 41.22 \\
SEDD            & 96.33 & 35.98 & 68.14 & \underline{48.93} & 67.82 & 45.39 & 40.03 \\
MDLM            & 90.96 & 33.22 & 64.94 & \textbf{48.29} & 62.78 & 43.13 & \underline{37.89} \\
BD3LM (BS=4)    & 96.81 & 31.31 & 60.88 & 50.03 & 61.67 & \underline{42.52} & 39.20 \\
\midrule
ABD (BS=1, ours) & \underline{90.08} & \underline{29.69} & \underline{58.34} & 51.00 & \underline{61.25} & 43.50 & 39.11 \\
ABD (BS=4, ours) & 94.34 & 30.16 & 62.07 & 51.38 & 62.46 & \textbf{40.65} & \textbf{37.11} \\
\bottomrule
\end{tabular}
\vspace{-5pt}
\end{table}

ABD at BS=1 outperforms BD3LM across five of seven benchmarks, 
with marginal differences on the remaining two, despite BD3LM being evaluated 
at its optimal fixed block size. At BS=4, ABD remains competitive while enabling 
parallel generation, and notably outperforms BD3LM on scientific domains 
(Pubmed, Arxiv), suggesting that multi-scale training improves generalization 
to domain-shifted text. Taken together, these results confirm that ABD learns a configuration-invariant language representation rather than overfitting to a specific denoising structure, consistent with training over a broad support in $\Omega$.

\begin{table}[b]
\centering
\setlength{\tabcolsep}{5pt}
\caption{Generative perplexity (Gen. PPL $\downarrow$) and number of function 
evaluations (NFEs $\downarrow$) for 300 samples at sequence lengths $L = 1024, 2048$. All models are trained on OWT. Baseline numbers taken from 
\citet{arriola2025blockdiffusioninterpolatingautoregressive}. 
\textbf{Bold}: best diffusion result, \underline{underline}: second best diffusion result.}
\label{tab:gen_ppl}
\footnotesize
\begin{tabular}{llcccc}
\toprule
& & \multicolumn{2}{c}{$L = 1024$} & \multicolumn{2}{c}{$L = 2048$} \\
\cmidrule(lr){3-4} \cmidrule(lr){5-6}
\textbf{Category} & \textbf{Model} & \textbf{Gen. PPL} & \textbf{NFEs} & \textbf{Gen. PPL} & \textbf{NFEs} \\
\midrule
AR & AR & 14.1 & 1K & 13.2 & 2K \\
\midrule
\multirow{2}{*}{Diffusion} 
& SEDD & 52.0 & 1K & -- & -- \\
& MDLM & 46.8 & 1K & 41.3 & 2K \\
\midrule
\multirow{2}{*}{Block Diffusion}
& SSD-LM ($L'=25$) & 37.2 & 40K & 35.3 & 80K \\
& BD3LM ($L'=4$) & \underline{25.7} & 1K & \textbf{23.6} & 2K \\
\midrule
\multirow{1}{*}{ABD (ours)}
& BS=4 & \textbf{25.22} & 1K & \underline{24.46} & 2K \\
\bottomrule
\end{tabular}
\end{table}

\subsection{Generative Perplexity}

Table~\ref{tab:gen_ppl} reports generative perplexity and number of function 
evaluations (NFEs) for 300 samples at sequence lengths $L = 1024$ and $L = 2048$, 
following the evaluation protocol of \citet{arriola2025blockdiffusioninterpolatingautoregressive}. 
Generative perplexity is measured using GPT2-Large and reflects the 
quality of generated samples rather than the model's denoising ability, providing 
a complementary view to evaluation perplexity.

ABD matches the generative quality of the strongest fixed-block specialist at equivalent computational cost, despite never being trained at a fixed block size. This demonstrates that training over a broad configuration distribution does not degrade sample quality. Unlike BD3LM, which requires separate specialist checkpoints for each operating point, a single ABD model achieves competitive performance across all block sizes, reflecting its invariance to configuration.

\subsection{Ablations}
\label{sec:ablations}
All ablations are conducted on LM1B using the same architecture and finetuning 
protocol as the primary model unless otherwise stated.

\subsubsection{Effect of Configuration Distribution}
\label{sec:ablation_dist}
The ABD objective is defined with respect to a training distribution $\pi$ over 
the configuration space $\Omega$. We compare three choices: categorical 
exponential, categorical uniform, and categorical lognormal, each over block 
sizes $\{1, 2, 4, \dots, 128\}$. Figure~\ref{fig:ablation_dist} shows perplexity 
as a function of inference block size for all three distributions.

The results reveal a clean and interpretable pattern that follows directly from the support-weighting effects described in Section~4: the training distribution $\pi$ determines how denoising errors are weighted across configurations. The categorical exponential distribution, which concentrates mass on small block sizes, achieves the best perplexity at BS=1 at the cost of degraded quality at large block sizes. The categorical uniform distribution exhibits the opposite behavior, performing best at BS=128 while sacrificing small-block quality. The categorical lognormal distribution occupies a middle ground, performing competitively across the full range without excelling at either extreme. The choice of $\pi$ therefore provides a principled mechanism for practitioners 
to optimize for their target inference regime at training time, without any 
architectural changes.

\begin{figure}[t]
\centering
\includegraphics[width=\textwidth]{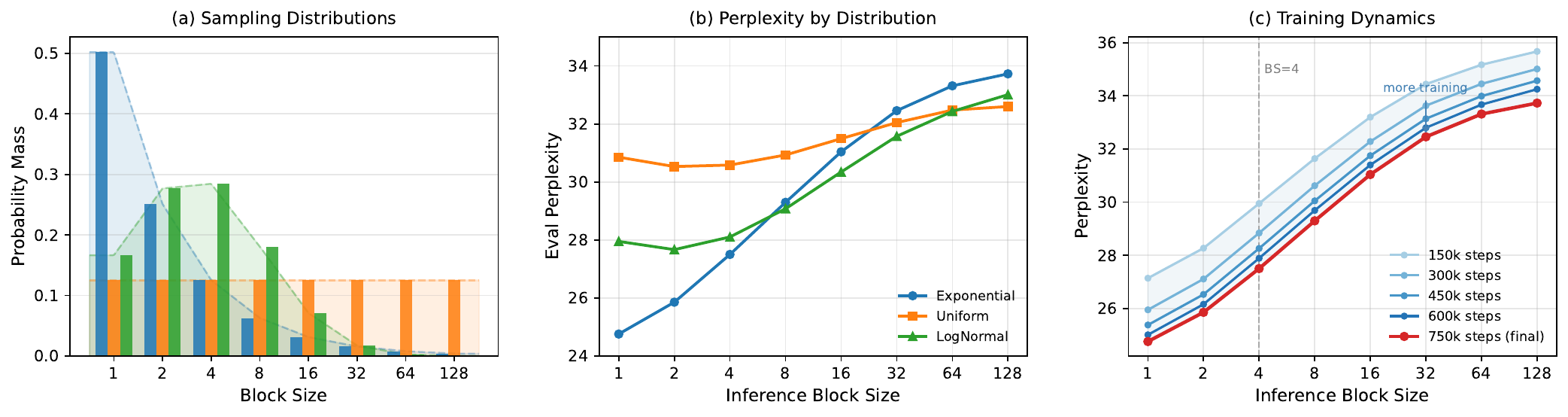}
\caption{Effect of block-size distribution $\pi$ and training dynamics on inference-time perplexity (LM1B, 750k steps from MDLM). \textbf{(a)} The three candidate distributions over block sizes. \textbf{(b)} $\pi$ controls the quality-granularity tradeoff: exponential weighting achieves lowest perplexity at small block sizes, lognormal balances the tradeoff, and uniform favors large blocks. \textbf{(c)} Perplexity improves across all block sizes throughout training, confirming multi-scale training acts as a stable regularizer.}
\label{fig:ablation_dist}
\end{figure}

\subsubsection{Training Dynamics}
\label{sec:ablation_dynamics}
Figure~\ref{fig:ablation_dist} shows eval perplexity as a function 
of training steps at various block sizes $\{1, 2, 4, \dots 128\}$ for the categorical exponential distribution based ABD model. 

The regularization effect of multi-scale training emerges early and strengthens 
throughout training: the model simultaneously improves across all block sizes 
rather than specializing toward any single scale. Perplexity at small block 
sizes improves most rapidly in early training, consistent with the exponential 
distribution placing higher probability mass on small block configurations. 
Across all block sizes, the model converges smoothly without instability, 
confirming that optimizing over a broad configuration distribution does not 
destabilize training dynamics.

\section{Discussion and Related Work}

\paragraph{Discrete Diffusion Language Models.}
Discrete diffusion for text generation was introduced by D3PM \citep{austin2023structureddenoisingdiffusionmodels}, which formalized denoising over discrete state spaces and established the masking-based formulation. Subsequent work improved performance and training efficiency, including SEDD \citep{lou2024discretediffusionmodelingestimating}, MDLM \citep{sahoo2024simpleeffectivemaskeddiffusion, shi2025simplifiedgeneralizedmaskeddiffusion}, and related approaches that improve training objectives and likelihood estimation. Structured extensions such as SSD-LM \citep{han2023ssdlmsemiautoregressivesimplexbaseddiffusion}, block diffusion models including BD3LM \citep{arriola2025blockdiffusioninterpolatingautoregressive}, and Eso-LMs \citep{sahoo2026esotericlanguagemodelsbridging} introduce prefix-conditioned generation by denoising token subsets, interpolating between autoregressive and fully parallel decoding. A number of these ideas have been scaled to larger model sizes \citep{nie2025largelanguagediffusionmodels,wu2025fastdllmv2efficientblockdiffusion, wang2025diffusionllmsfasterthanarinference}, and adapted commercially \citep{song2025seeddiffusionlargescalediffusion, labs2025mercuryultrafastlanguagemodels}. While these methods improve efficiency and controllability, they are typically trained over restricted subsets of the configuration space (e.g., fixed masking patterns or block-structured regimes), limiting their ability to generalize across decoding strategies without additional heuristics or architectural changes. In contrast, ABD treats the configuration as part of the learning problem, training over a distribution of configurations to enable a single model to generalize across decoding scales.

\paragraph{Diffusion Learning Dynamics and Inference-Time Techniques.}
Recent work has studied the scaling behavior, data efficiency, and learning dynamics of diffusion language models \citep{vonrutte2026scalingbehaviordiscretediffusion, ni2025diffusionlanguagemodelssuper, prabhudesai2025diffusionbeatsautoregressivedataconstrained}. These works provide insight into how diffusion models learn at the token and distribution level, but do not consider how training over restricted configuration subsets impacts generalization across decoding strategies. A complementary line of work focuses on improving performance through inference-time adaptation, including adaptive scheduling, semantic-aware decoding, and test-time scaling \citep{lu2026adablockdllmsemanticawarediffusionllm, lu2026advancingblockdiffusionlanguage, lee2025testtimescalingdiffusionllms, huang2025ctrldiffboostinglargediffusion}. Other approaches explore flexible generation orders or hybrid autoregressive--diffusion schemes \citep{kim2025anyorderflexiblelengthmasked, liu2025tidarthinkdiffusiontalk, kim2025trainworstplanbest}. In contrast, our work addresses this during training  by identifying configuration support as the key factor governing cross-scale generalization, enabling native support for arbitrary decoding strategies without test-time heuristics.

\paragraph{Limitations.} A key limitation of ABD is its dependence on the choice of configuration distribution $\pi$, which determines the tradeoff between performance across decoding regimes. While broad support over $\Omega$ enables generalization, suboptimal weighting can bias the model toward frequently sampled configurations, leading to uneven performance across scales. Additionally, ABD does not directly address inference efficiency: although it enables flexible decoding, selecting optimal inference-time policies remains an open problem. Finally, while our theoretical analysis characterizes optimality under support coverage, it does not provide finite-sample guarantees, and results may depend on training coverage quality.

\section{Conclusion}

We introduced Adaptive Block Diffusion (ABD), a training framework that resolves the mismatch between training and inference in diffusion language models by optimizing denoising risk over a distribution of configurations. By viewing diffusion training as risk minimization over configuration space, we showed that generalization across decoding strategies is governed by the support of the training distribution. This perspective explains the failure of fixed-structure approaches, which restrict training to a limited subset of configurations and degrade when evaluated off-grid.

Empirically, ABD demonstrates structural invariance across decoding scales, recovering the expected monotonic relationship between block size and perplexity while matching or outperforming fixed-block specialists. These results show that training over a broad configuration distribution enables robust generalization across inference regimes.

Overall, our work identifies configuration support as the central factor governing generalization in diffusion language models and establishes adaptive configuration training as a principled alternative to fixed-structure objectives.

\bibliography{colm2026_conference}
\bibliographystyle{colm2026_conference}
\appendix

\newpage
\section{Theoretical Proofs}

\subsection{Proof of Theorem~\ref{thm:abd-optimality}}

\begin{proof}
Let $(\Omega,\mathcal F,\pi)$ denote the probability space over configurations.
The functional
\[
\mathcal{R}_\pi(\theta)
=
\int_\Omega \mathcal{R}(\theta;k,\ell)\, d\pi(k,\ell)
\]
is well-defined and non-negative since
$\mathcal{R}(\theta;k,\ell)$ is a finite sum of Kullback--Leibler divergences.

By the information inequality, for any distributions $P,Q$,
$D_{\mathrm{KL}}(P\|Q) \ge 0$, with equality if and only if $P=Q$ almost
everywhere. Hence $\mathcal{R}(\theta;k,\ell) \ge 0$ for all $(k,\ell)$.

Let $\theta^\star$ be a global minimizer of $\mathcal{R}_\pi(\theta)$. Suppose
there exists a measurable set $A \subseteq \Omega$ with $\pi(A) > 0$ such that
\[
\mathcal{R}(\theta^\star;k,\ell)
>
\inf_{\theta \in \Theta} \mathcal{R}(\theta;k,\ell)
\quad \text{for all $(k,\ell)\in A$.}
\]
Then there exists $\varepsilon > 0$ such that
\[
\mathcal{R}(\theta^\star;k,\ell)
\ge
\inf_{\theta \in \Theta} \mathcal{R}(\theta;k,\ell) + \varepsilon
\quad \forall (k,\ell)\in A.
\]
Integrating over $\Omega$ yields
\[
\mathcal{R}_\pi(\theta^\star)
\ge
\int_\Omega
\inf_{\theta \in \Theta} \mathcal{R}(\theta;k,\ell)\, d\pi
+
\varepsilon\,\pi(A),
\]
which contradicts the optimality of $\theta^\star$. Therefore,
\[
\mathcal{R}(\theta^\star;k,\ell)
=
\inf_{\theta \in \Theta} \mathcal{R}(\theta;k,\ell)
\quad \text{for $\pi$-almost every $(k,\ell) \in \Omega$.}
\]
\end{proof}

\subsection{Proof of Theorem~\ref{thm:alignment}}

\begin{proof}
By the assumption $\Pi_{\mathrm{inf}} \ll \pi$, the Radon--Nikodym theorem
guarantees the existence of a non-negative measurable function
$f : \Omega \to \mathbb{R}_{\ge 0}$ such that for any measurable
$A \subseteq \Omega$,
\[
\Pi_{\mathrm{inf}}(A) = \int_A f(k,\ell)\, d\pi(k,\ell).
\]

Since $\mathcal{R}(\theta; k,\ell)$ is non-negative and measurable in
$(k,\ell)$, the inference risk can be written as
\[
\mathcal{R}_{\mathrm{inf}}(\theta)
=
\int_\Omega
f(k,\ell)\,\mathcal{R}(\theta; k,\ell)\, d\pi(k,\ell).
\]

Let $\theta^\star$ be a global minimizer of $\mathcal{R}_\pi(\theta)$. By
Theorem~\ref{thm:abd-optimality},
\[
\mathcal{R}(\theta^\star; k,\ell)
=
\inf_{\theta \in \Theta}
\mathcal{R}(\theta; k,\ell)
\quad \text{for $\pi$-almost every $(k,\ell)$.}
\]

Define the measurable set
\[
N := \{(k,\ell) \in \Omega \mid
\mathcal{R}(\theta^\star; k,\ell)
>
\inf_{\theta \in \Theta} \mathcal{R}(\theta; k,\ell)
\}.
\]
Then $\pi(N)=0$, and by absolute continuity, $\Pi_{\mathrm{inf}}(N)=0$.

For any $\theta \in \Theta$, we therefore have
\[
\mathcal{R}_{\mathrm{inf}}(\theta^\star)
=
\int_{\Omega \setminus N}
f(k,\ell)\,\mathcal{R}(\theta^\star; k,\ell)\, d\pi
\le
\int_{\Omega \setminus N}
f(k,\ell)\,\mathcal{R}(\theta; k,\ell)\, d\pi
=
\mathcal{R}_{\mathrm{inf}}(\theta),
\]
which shows that $\theta^\star$ is a global minimizer of
$\mathcal{R}_{\mathrm{inf}}(\theta)$.
\end{proof}

\subsection{Proof of Proposition~\ref{prop:block-limitation}}

\begin{proof}
Partition the configuration space as
\[
\Omega = \Omega_{\mathrm{BD}} \cup \Omega_{\mathrm{BD}}^{c}.
\]
By definition of $\pi_{\mathrm{BD}}$, for any measurable
$A \subseteq \Omega_{\mathrm{BD}}^{c}$ we have $\pi_{\mathrm{BD}}(A)=0$.

The expected training risk can therefore be written as
\[
\mathcal{R}_{\pi_{\mathrm{BD}}}(\theta)
=
\int_{\Omega_{\mathrm{BD}}}
\mathcal{R}(\theta; k,\ell)\, d\pi_{\mathrm{BD}}(k,\ell),
\]
since the contribution from $\Omega_{\mathrm{BD}}^{c}$ vanishes.

Thus, the value of $\mathcal{R}(\theta; k,\ell)$ for
$(k,\ell) \in \Omega_{\mathrm{BD}}^{c}$ does not affect the objective
$\mathcal{R}_{\pi_{\mathrm{BD}}}(\theta)$ and is unconstrained by its minimization.

Now suppose $\Pi_{\mathrm{inf}} \not\ll \pi_{\mathrm{BD}}$. Then there exists a
measurable set $A \subseteq \Omega$ such that $\pi_{\mathrm{BD}}(A)=0$ but
$\Pi_{\mathrm{inf}}(A)>0$. On this set $A$, the conditional risk
$\mathcal{R}(\theta^\star; k,\ell)$ is not optimized by training.

Since the inference risk
\[
\mathcal{R}_{\mathrm{inf}}(\theta^\star)
=
\int_{\Omega} \mathcal{R}(\theta^\star; k,\ell)\,
d\Pi_{\mathrm{inf}}(k,\ell)
\]
assigns positive mass to $A$, $\theta^\star$ is not guaranteed to minimize
$\mathcal{R}_{\mathrm{inf}}(\theta)$ over $\Theta$.
\end{proof}

\section{Implementation Details}
\label{app:implementation}

\subsection{Attention Mask Construction}
\label{app:mask}

The ABD attention mask is defined over a $2L$-length sequence formed by
concatenating the noisy tokens $\mathbf{z}_t \in \mathbb{R}^L$ and the clean
tokens $\mathbf{x}_0 \in \mathbb{R}^L$ as $[\mathbf{z}_t \mid \mathbf{x}_0]$.
Positions $0, \ldots, L{-}1$ index the noisy half and positions
$L, \ldots, 2L{-}1$ index the clean half.
Given a block assignment $\texttt{block\_ids} \in \mathbb{Z}^L$ that maps each
token to its block index, the mask is the union of three logical regions:

\begin{itemize}
    \item \textbf{Block-diagonal} ($M_\text{BD}$): a noisy token at position
          $i$ attends to noisy token $j$ if and only if they share the same
          block, i.e.\ $b_i = b_j$.  This restricts intra-half attention to
          within-block pairs.

    \item \textbf{Offset block-causal} ($M_\text{OBC}$): a noisy token at
          position $i$ attends to clean token $j$ if and only if its block
          strictly precedes the clean token's block, i.e.\ $b_i > b_j$.  This
          allows each block to condition on the clean prefix produced by all
          earlier blocks.

    \item \textbf{Block-causal} ($M_\text{BC}$): a clean token at position $i$
          attends to clean token $j$ if and only if $b_i \geq b_j$.  This
          enforces causal self-attention over the clean half at block
          granularity.
\end{itemize}

The quadrant $\mathbf{x}_0 \to \mathbf{z}_t$ is always zero, preventing
information flow from the clean half back into the noisy half.
Formally, the combined mask is

\begin{equation}
    M(i,j) \;=\; M_\text{BD}(i,j) \;\vee\; M_\text{OBC}(i,j) \;\vee\; M_\text{BC}(i,j).
    \label{eq:abd_mask}
\end{equation}

Figure~\ref{fig:attention_masks} contrasts the mask structure of standard
Block Diffusion (BD3LM), which uses fixed block boundaries, with that of
Adaptive Block Diffusion, where block boundaries are sampled stochastically
at each training step.  In the fixed-block case every block in $M_\text{BD}$
is identically sized; in the adaptive case block widths vary, reflecting the
sampled configuration distribution $\pi$.

\begin{figure}[h]
    \centering
    \includegraphics[width=\linewidth]{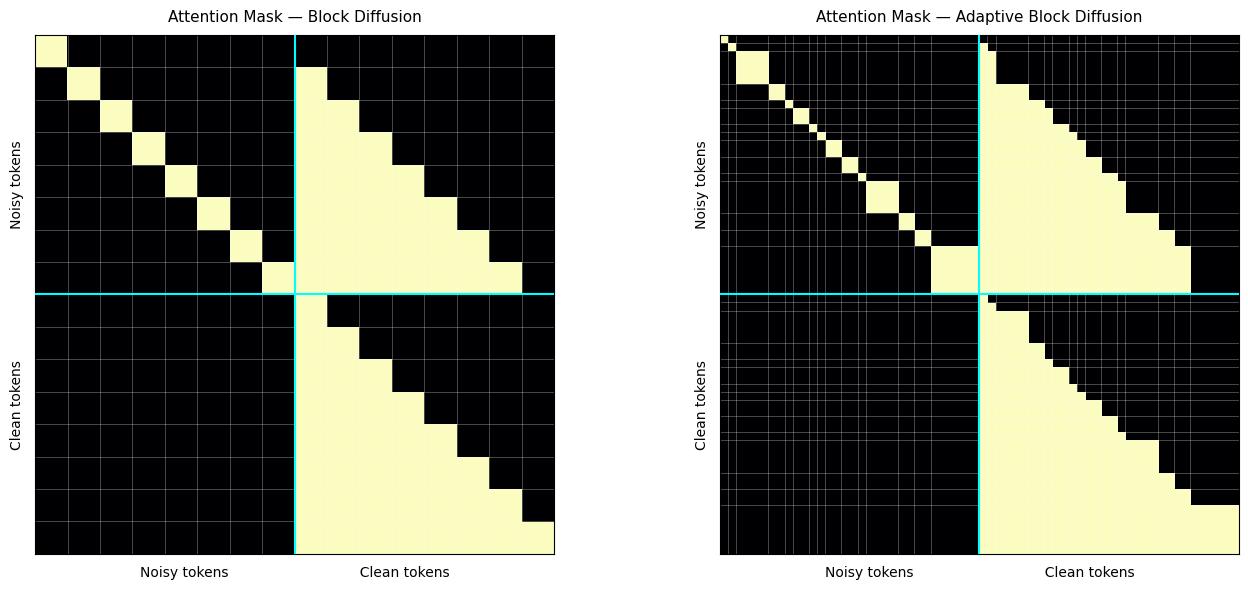}
    \caption{
        \textbf{Attention mask structure.}
        \emph{Left:} Block Diffusion (BD3LM) with a fixed block size.
        The noisy half (top-left quadrant) exhibits a uniform block-diagonal
        pattern; the clean half (bottom-right quadrant) is block-causal with
        equal-width steps.
        \emph{Right:} Adaptive Block Diffusion with a stochastically sampled
        configuration.  Block boundaries are irregular, reflecting a draw from
        the training distribution $\pi$ over prefix--window pairs
        $(k, \ell) \in \Omega$.  Both masks share the same three-region
        structure ($M_\text{BD}$, $M_\text{OBC}$, $M_\text{BC}$) defined in
        Eq.~\eqref{eq:abd_mask}; the quadrant $\mathbf{x}_0 \to \mathbf{z}_t$
        (bottom-left) is always masked.
        Cyan lines mark the noisy/clean boundary at position $L$.
    }
    \label{fig:attention_masks}
\end{figure}

The mask is implemented as a \texttt{flex\_attention} score modifier,
which avoids materialising the full $2L \times 2L$ boolean matrix:

\begin{lstlisting}[language=Python]
def abd_attention_mask(b, h, q_idx, kv_idx, *, block_ids, n):
    '''
    ABD score mask for flex_attention.

    The 2n-length sequence is laid out as [z_t (noisy) | x_0 (clean)].
    Positions 0..n-1 are noisy; positions n..2n-1 are clean.

    Args:
        b, h:      Batch and head indices (required by flex_attention API).
        q_idx:     Query position in [0, 2n).
        kv_idx:    Key-value position in [0, 2n).
        block_ids: LongTensor of shape (n,) mapping token i to its block index.
        n:         Original sequence length L.

    Returns:
        Boolean mask: True where attention is permitted.
    '''
    is_clean_q  = q_idx  >= n
    is_clean_kv = kv_idx >= n

    block_q  = block_ids[q_idx  % n]
    block_kv = block_ids[kv_idx % n]

    # M_BD: noisy -> noisy, same block
    m_bd  = (~is_clean_q) & (~is_clean_kv) & (block_q == block_kv)

    # M_OBC: noisy -> clean, strictly earlier block
    m_obc = (~is_clean_q) &   is_clean_kv  & (block_q >  block_kv)

    # M_BC: clean -> clean, current or earlier block
    m_bc  =   is_clean_q  &   is_clean_kv  & (block_q >= block_kv)

    return m_bd | m_obc | m_bc
\end{lstlisting}

\subsection{Configuration Distribution}
\label{app:config_dist}

Block configurations are sampled by \texttt{ABDBoundaryManager}, which draws
a single segmentation per training step and broadcasts it across the batch.
A sequence of block lengths is sampled from the chosen distribution $\pi$,
accumulated into boundaries via cumulative sum, and mapped to per-token block
indices by counting how many boundaries each token position exceeds.
The implementation supports six distributions over block lengths, all
parameterised by a target block size $\bar{\ell}$; the categorical family
additionally restricts support to powers of two
$\{2^0, 2^1, \ldots, 2^{\lfloor \log_2 L \rfloor}\}$:

\begin{lstlisting}[language=Python]
class ABDBoundaryManager:
    '''
    Samples stochastic block configurations for Adaptive Block Diffusion.

    A single segmentation is drawn per training step and broadcast across
    the batch, so all sequences share an identical block structure.

    Args:
        seq_len:     Sequence length L.
        target_size: Scale parameter for the expected block size.
    '''

    def __init__(self, seq_len: int, target_size: int = 16) -> None:
        self.L = seq_len
        self.target_size = target_size

    def get_block_ids(self, bsz, device, dist_type='categorical_exponential'):
        '''
        Sample block lengths from pi, accumulate boundaries, and return
        per-token block indices of shape (L,).
        '''
        lengths = self._sample_lengths(device, dist_type)   # (1, S)
        lengths = lengths.clamp(min=1, max=self.L).long()
        boundaries = lengths.cumsum(dim=-1)                 # (1, S)

        token_indices = torch.arange(self.L, device=device).unsqueeze(1)
        block_ids = (token_indices >= boundaries).sum(dim=-1)   # (L,)
        return block_ids

    def _sample_lengths(self, device, dist_type):
        S = (self.L // max(self.target_size // 2, 1)) + 5

        if dist_type == 'uniform':
            return torch.randint(
                1, 2 * self.target_size + 1, (1, S), device=device
            ).float()

        if dist_type == 'exponential':
            u = torch.rand((1, S), device=device).clamp(min=1e-7)
            return -torch.log(u) * self.target_size

        if dist_type == 'log_normal':
            mu = torch.log(torch.tensor(float(self.target_size), device=device))
            return torch.exp(torch.randn((1, S), device=device) * 0.5 + mu)

        # Categorical family: support restricted to powers of two
        max_power = int(torch.log2(torch.tensor(float(self.L))).item())
        sizes = 2 ** torch.arange(max_power + 1, device=device)
        S_cat = self.L

        if dist_type == 'categorical_exponential':
            # Exponential decay: P(2^k) proportional to 0.5^k
            probs = 0.5 ** torch.arange(len(sizes), device=device).float()

        elif dist_type == 'categorical_uniform':
            # Uniform over all powers of two
            probs = torch.ones(len(sizes), device=device)

        elif dist_type == 'categorical_log_normal':
            # Gaussian bump centred at target_size on the power-of-two grid
            mu = torch.log2(
                torch.tensor(float(self.target_size), device=device)
            )
            x = torch.arange(len(sizes), device=device).float()
            probs = torch.exp(-((x - mu) ** 2) / 2.0)

        else:
            raise ValueError(f'Unknown dist_type: {dist_type!r}')

        idx = torch.multinomial(probs, S_cat, replacement=True)
        return sizes[idx].unsqueeze(0).float()
\end{lstlisting}

\subsection{Experimental Details}
\label{app:experimental_details}

We use the same codebase, datasets, architecture, likelihood evaluation, 
and inference setup as \citet{arriola2025blockdiffusioninterpolatingautoregressive}, 
to which we refer the reader for full details. We describe only the aspects 
that differ.

\subsubsection{Training}
Our primary ABD model is initialized from a checkpoint trained with the 
full sequence length MDLM objective for 250k gradient steps, followed by 
finetuning for 750k steps under the categorical exponential configuration 
distribution. This differs from \citet{arriola2025blockdiffusioninterpolatingautoregressive}, 
which uses 850k MDLM pretraining steps followed by 150k block diffusion 
finetuning steps. The total training budget of 1000k steps is matched. 
We find that the diverse multi-scale objective of ABD requires a larger 
finetuning budget to converge across the full configuration space, 
motivating the reallocation toward ABD finetuning. Training translates 
to 65B tokens on LM1B and 524B tokens on OWT.

\subsubsection{Configuration Sampling}
At each training step, the best ABD model (presented in the main paper results) samples a block configuration $(k, \ell) \sim \pi$ from the categorical exponential distribution, which assigns 
probability proportional to $2^{-k}$ to the $k$-th block size in 
$\{1, 2, 4, \ldots, 128\}$, normalized to sum to 1. The attention mask 
is constructed dynamically based on the sampled configuration, as 
described in Appendix~\ref{app:implementation}.

\subsection{Design Choices: Distribution Type and Training Budget Allocation}
\label{app:design}

The design of ABD involves two coupled choices: the form of the configuration 
distribution $\pi$ and the allocation of the training budget between MDLM 
pretraining and ABD finetuning. We describe the reasoning behind these choices 
and provide supporting evidence.

\paragraph{Training budget allocation.} Following BD3LM \citep{arriola2025blockdiffusioninterpolatingautoregressive}, 
we initially allocated 850k steps to MDLM pretraining and 150k steps to ABD 
finetuning. However, the diversity of the ABD training objective, which 
optimizes over a broad distribution of configurations rather than a single 
fixed block size, requires more ABD training steps to converge. With only 
150k ABD steps, the model retains strong MDLM-style performance at large block 
sizes but underfits at small block sizes. Training from scratch for 1000k ABD 
steps avoids this underfitting but produces the opposite failure: without MDLM 
initialization, the model lacks a coherent language modeling foundation and 
collapses at large block sizes, as shown in Table~\ref{tab:init_continuous}. 
The continuous uniform scratch model reaches 142.69 perplexity at BS=128, 
worse than MDLM at BS=1 (106.56). We found that 250k steps of MDLM pretraining 
followed by 750k steps of ABD finetuning achieves the best balance: sufficient 
language modeling grounding combined with enough ABD training to converge across 
the full configuration space.

\paragraph{Discrete vs.\ continuous distributions.} We initially trained with 
continuous distributions over block sizes (exponential, uniform, lognormal), 
which provide dense coverage of the configuration space $\Omega$. However, 
continuous distributions make optimization difficult: the model must simultaneously 
learn to denoise at arbitrarily sized windows, leading to slower convergence and 
unstable performance at extreme block sizes. Restricting $\pi$ to powers of two 
$\{1, 2, 4, \ldots, 128\}$ preserves broad coverage of the parallelism spectrum 
while making the optimization problem more tractable. Each power-of-two block 
size represents a qualitatively distinct operating point from fully 
autoregressive ($\ell=1$) to highly parallel ($\ell=128$), and the discrete 
grid avoids the redundancy of nearby continuous configurations that contribute 
similar gradients. Table~\ref{tab:disc_vs_cont} shows that discrete distributions 
consistently outperform their continuous counterparts under matched total training 
budgets, particularly at small block sizes where continuous distributions 
underperform despite nominally covering those configurations.

\begin{table}[t]
\centering
\setlength{\tabcolsep}{4pt}
\caption{Perplexity on LM1B comparing discrete categorical vs.\ continuous 
configuration distributions. All models are trained for 1000k total steps: 
discrete models use 250k MDLM + 750k ABD; continuous models use 850k MDLM 
+ 150k ABD. The key variable is the proportion of the budget allocated to 
the ABD objective.}
\label{tab:disc_vs_cont}
\scriptsize
\begin{tabular}{llccccccc}
\toprule
\textbf{Type} & \textbf{Distribution} & \textbf{1} & 
\textbf{4} & \textbf{8} & \textbf{16} & \textbf{32} & \textbf{64} & 
\textbf{128} \\
\midrule
\multirow{3}{*}{Discrete (250k+750k)}
& Exponential & \textbf{24.76} & \textbf{27.50} & \textbf{29.30} & \textbf{31.04} & 32.46 & 33.31 & 33.73 \\
& Uniform     & 30.86  & 30.58 & 30.93 & 31.49 & {32.05} & {32.48} & {32.60} \\
& LogNormal   & 27.95 & 28.10 & 29.08 & 30.34 & \textbf{31.58} & 32.43 & 33.01 \\
\midrule
\multirow{2}{*}{Continuous (850k+150k)}
& Exponential & 30.84  & 30.22 & 30.59 & 31.10 & 31.64 & \textbf{32.04} & \textbf{32.29} \\
& Uniform     & 31.41  & 30.27 & 30.54 & 31.04 & 31.61 & 32.07 & 32.39 \\
& LogNormal   & 32.27 & 30.43 & 30.56 & 31.07 & 31.71 & 32.12 & 32.47 \\
\bottomrule
\end{tabular}
\end{table}

\begin{table}[t]
\centering
\setlength{\tabcolsep}{4pt}
\caption{Perplexity on LM1B: effect of training budget allocation for 
continuous distributions. All models trained for 1000k total steps. 
Scratch models allocate all steps to ABD; finetune models use 850k MDLM 
+ 150k ABD. MDLM initialization prevents collapse at large block sizes 
but requires sufficient ABD training to generalize at small block sizes.}
\label{tab:init_continuous}
\scriptsize
\begin{tabular}{llccccccc}
\toprule
\textbf{Init} & \textbf{Distribution} & \textbf{1} & 
\textbf{4} & \textbf{8} & \textbf{16} & \textbf{32} & \textbf{64} & 
\textbf{128} \\
\midrule
\multirow{3}{*}{Scratch (1000k ABD)}
& Exponential & \textbf{28.50}  & 28.95 & 29.77 & 30.68 & 32.70 & 36.88 & 43.05 \\
& Uniform     & 28.93  & \textbf{28.82} & \textbf{29.48} & \textbf{30.33} & 31.83 & 44.08 & 142.69 \\
& LogNormal   & 30.62  & 29.02 & 29.62 & 30.48 & 32.54 & 34.82 & 38.30 \\
\midrule
\multirow{2}{*}{Finetune (850k+150k)}
& Exponential & 30.84 & 30.22 & 30.59 & 31.10 & 31.64 & \textbf{32.04} & \textbf{32.29} \\
& Uniform     & 31.41 & 30.27 & 30.54 & 31.04 & \textbf{31.61} & 32.07 & 32.39 \\
& LogNormal   & 32.27 & 30.43 & 30.56 & 31.07 & 31.71 & 32.12 & 32.47 \\
\bottomrule
\end{tabular}
\end{table}

Together, these results motivate the final ABD design: discrete categorical 
distributions over powers of two, with 250k steps of MDLM pretraining and 
750k steps of ABD finetuning. This combination provides sufficient language 
modeling grounding, tractable optimization over the configuration space, and 
enough ABD training to generalize across the full block size range.

\section{Samples}
\begin{figure}[t]
\centering
\label{fig:sample1k}
\begin{tcolorbox}
\scriptsize
Top 2 Most Powerful Internet Utilities in 2016 – How Does your Experience Affect your View of the Internet? While reading about E.T. smart meters, I thought “So the answer to all of the utilities using smart meters to monitor and analyze their network is: Put a smart meters in your closet.” So, the answer to all of the utilities using smart meters to monitor and analyze their network is: Put a smart meters in your closet. E.T. smart meters can be used for meters for your Internet access or for mobile usage. Like your home internet phone and netboard, Smart meters help users, businesses, and organizations get connected devices to their site with the Internet. Smart meters enable a lot better online education, marketing, and purchasing. But in most cases, you can buy a smart meter with your best Internet usage statistics and report all the areas covered in the meter. As we mentioned earlier, smart meter is just a form of forlier meter which that uses a unique identification and generates data by observing the Internet usage of a smart device. There are so many companies that claim to provide smart meters that in no way way relies on what you do. You are not relying on what you buy, you’re going to rely on what you do. It is a big technical problem to implement smart meters. However, how do they compare to different utilities? Even if you still use various smart utilities, you need to check out the experience of the utility? What is your Internet usage? What are the benefits in terms of data use? Somehow should your utility analyze data to make that decision?Since the Internet is a really interesting endeavor, it has been a real challenge to find the right fit for your utility and to analyze data there.How often does an outage affect your overall view of the Internet? The answer isn’t that often. Sometimes a large outage will affect your Internet usage but it also won’t affect your overall view of the Internet. For example, during a long outage, does it affect your overall view of the Internet? What does your data consumption mean to your view of the Internet or the utility? Most of the time, there’s no data you can take away from you until something happens. What would it mean if the global temperature was at 34 °C? What are your recommendations for achieving the highest levels of reliability on your utility? What if the power doesn’t go through the breakdown? What would it mean to increase your view of the Internet or your reliability? What is it about Internet usage that you want to improve more? Looking at your utility’s Internet monitoring reports, there are a number of factors that determine how the utility impacts your own view of the Internet. The three main factors vary between utility systems so there is a chance that the data on your utility may change or not have always been used. How should you determine the data on your utility? When you’re getting more and more data from your utility, you need to trust that your utility is not violating any privacy rules. As much as you want to do it, you want to be aware of what you’re doing and do not trust your utility to collect data on you any more than you want to monitor your website. At the end of the day, you use your utility settings to make a choice for you, and once you’ve used all your settings, you probably should review it and see what data it collects. Most of all, you need to make sure your utility is the right one for your needs. Not only do you need to figure out how you’re going to maximize your data use, but you need to find out about your utility and its data consumption. What your own utility has done to improve their utility over the years? In 2017, we highlighted some examples of how a utility utility has continued to improve, or enhance, their utility data. FireMapp offers a feature to share with your users their location usage. This feature has been shown to have played a big role in improving this site’s performance. SuperBuddy sells the ability to share your data usage within a building. Another example is that of ADAPT, the app that makes it easy for companies to share data. They do this by checking factors like “home minutes spent”, “work time spent, “pea time spent, “drime and cost”, “price ratio”, “cost”, “speed”, “hours”, etc. SmartGoOn provides a tool for sharing your data: you can see how you personal usage compares to your utility’s data usage. The features allow you to easily compare your different habits, by comparing your usage
\end{tcolorbox}
\caption{Sample from ABD for block size L' = 4 of length L = 1023 under T = 5K diffusion steps (trained with a context length of L = 1024). The generative perplexity of this sample under GPT2-Large is 15.3, and its entropy is 5.03.}
\end{figure}

\begin{figure}[t]
\vspace{-30pt}
\centering
\label{fig:sample2k}
\begin{tcolorbox}
\scriptsize
British Prime Minister David Cameron has mocked Scotland, Wales and England in his ""surprise talk of the day"" as he lined up a spot for London\'s Olympics Summer Games. Cameron played down criticism the case raised by the USA – the city where so many of its residents live, he said – could host the Summer Games. ""These in England are no more amazing - or thrilling – than the Olympic Games in Beijing. It is going to be all the more fun to see them,"" Cameron told the London Evening Standard. ""And I\'d like to offer some favours to those responsible for bringing those Games across the lines. ""Now it is great for the people in England to not be the only ones us are thinking about being a special guest. ""A handful of cabinet ministers can come. That is the huge bonus. That\'s why it is my pleasure to put them on."" Cameron\'s remark comes as the Summer Games begin at the start of this month, scheduled for more than £30m. Games are expected to showcase some of Britain’s best and brightest talent. In a measure of these sights, food has been offered in an area dominated by few Britons. On Thursday, the first group of chefs in Olympic Park were signing promotional contracts in silver bottles of Champagne. Later that day, the Quiet House chef known as Eastie nominated a piece of a cocktail to be the highlight of the two-day line of shows. In a bid to make a promise to all the children at the Olympics, he snapped a date of the Royal Pavilion - home to British-owned Brixton - and delivered the cocktail to the children’s playground in the Thames for several days. Other champagne favourites Chef Ray Dussosts of Corry Street, London, and Ryan Haynes of Jeref Seafood in Londonderry have also won guest spots. Joining Gillon, his stepfather and sister, are the two white boys running the Road Cocktail Kitchen from David, the Castle Tavern in Devon, and Tyrone McCray of the Wild Firestone Café in Manchester. A Bevan Beard chef by profession, Harry Harrison, will spend his \$125,000 cocktail cash at a public restaurant ice-cream party next month. Additionally, three men from Southampton will also join Ian Searls of Fulham, Kent, and Mark Frost of Zappros in presenting their own cocktail at a private event in the northern city. Cameron is expected to eat at this summer’s world championships and the event is expected to be watched by the millions of spectators in London, Manchester and other cities around the world. England and Wales’ respective governments, the UK and the Commonwealth, have chosen not to host the games.<|endoftext|>The long-awaited Free Parking Report was finally unveiled at ""Mainline,"" Boston\'s news forum for downtown. That said, is he looking at the worst and the worst transit areas where parking is unsafe? Criments The long-awaited Free Parking Report was finally unveiled at ""Mainline,"" Boston\'s news forum for downtown. That said, is he looking at the worst and the worst transit areas where parking is unsafe? Many people are aware of the study\'s findings about how the extension to Logan Square is affecting transit accessibility, but the official recommendations on surface and subway stations were not made in a complete copy of the report. Mayor and his current administration, Mayor Marty Walsh, took a rough assessment of the preliminary study, as did the findings of the Secretary of Transportation. Now that that assessment has been broken into four parts, starting with the upper-level, like city Park South, by the time the transit department ramps up the installation of the sodas next year, the recommendations will cover the final four sections. Making a stab at business One thing to stand out from the official report is the new report\'s finding that Bostonians want a better transit system, and therefore, are concerned that certain parts of the city should be transferred to light buses rather than on light cars. ""I generally think of light buses as inefficient, in that sense. A bus is parked, with an elevator system, with people trying to get off the bus. I think it\'s a significant disadvantage to someone who can\'t afford to live in the area. I think there are more opportunities for transportation. So is the use of buses? Are people in an area where buses provide a way to go, that they look to work rather than ride the trolleybus, that is similar to what is going on in the mall or the city?"" one member asked. Trolley or tram At a meeting Monday morning, Mayor Walsh noted a concern about the final recommendation\'s focus on a ""light bus"" rather than a transit system. The report doesn\'t do that. Instead, it said it should be developed as rail-style buses. The report focuses on light light buses for riders, who are the only carless option on the network. Light Rail buses would operate on an underground line, for example. If enough people would have enough electricity to ride on a train at the line\'s stations, that would help reduce the distance to ride on a bus. Under the plan, the train route would also operate slowly, which would encourage use of the more expensive light buses. Bus Rapid transit Boston officials are actively discussing the development of a bus rapid transit, a rapid line that tracks passengers at or near bus stops, to run sometime in the future. One of the buses will be Link - the LRT line running between Boston Common and Logan Square and is projected to be out in 2015. But the study was conducted only on the MBTA, not bus rapid transit or its regional transit plans, the only map released to the public available at these locations. The light rail line would have about 120 stops, while the seven buses at each station at city Park had 30. That required a bus to provide multiple stops on three stations. As a result, the line would generate about 150,000 trips a year. Advocates want MBTA to move quickly But one of the people enthusiastic about the new light rail was a local developer who is pushing the idea. Morrow from Fairmont Plaza developer Robert Cummins, which proposed 15 years ago to pay for a bike-friendly version of the line, the new light rail project could not have been prepared for the realities of the city. In part, the project will rely heavily on maintenance. In the meantime, city officials are lining up financing money from public sources, two of them from Boston Common. Five years ago, Mayor Walsh pointed out that the city hasn\'t built a new light rail since 1981, when some investors along with the state legislature approved the Medford Line. A \$390 million master plan from Massachusetts Department of Transportation would pay to replace the Medford Line, too. The MBTA has also been partnering with the American College of Bicy Cities to provide an alternate route to work on at Fairmont Plaza. The study found that a light rail route would eliminate the need for additional four stops along the route, improving bus service and allowing more people to use the system. Freightrail plan There\'s detail to the report doesn\'t cover the cost estimates, but one of the findings is that the light rail line will provide short-term employment. But it will create a considerable spur from the city of downtown Boston that has not developed very large enough. Transit Alternatives Communications, a statewide advocacy group over the past decade, has offered this economic theory in a detailed analysis of recent light rail development across Boston, and it hasn\'t been borne out by anyone. Other studies, however, don\'t provide enough details to support that conclusion. Fellow cities such as Long Beach, Athens-Oxford, Oakland, Detroit, and others with greater transit needs are developing light rail options that are useful but not nearly as impressive as the Portland rail-rail network, an alternative to light rail in Portland with many concerns. Changes made Even if the government does do enough to keep the Gold Line on the ground, it will still be agonizing to pull that plug when the official report is in place. One of the recommendations from the consultant suggested that a TTA board, at least those close to the MBTA system, consider a toll-free bikeway along the extension of the Gold Line line rather than light rail. And when considering a bikeway at the southern end of the line that would connect the Gold Line to the Extension of the Green Line, the consultant recommended that it would be safe to go the other way. That proposal, in short, might be a new proposal. But at the moment, the only deal on the Gold Line extension is the one on the bridge on the Westmead/Hudson/Green Line extension.<|endoftext|>The economic crisis over whether we should bring down our debt triggered a government shutdown back in September when House Speaker John Boehner (R-Ohio) attacked Democrats across party lines for refusing to reopen the government, just hours after the shutdown had already been declared. But on Tuesday, Republicans will not be giving up that fight, and it is this fight that is going to affect the party’s prospects for the House and Senate. The GOP says they are prepared to offer measures to avoid a repeat of the House shutdown. But they are hesitant to do so, and Republicans are already feeling that pressure. ""You think that the last economic downturn of this kind is here and you think the last in history is here and you think the last the entire last century and you can\'t afford to be the same before that it is,” said Rep. Pete Sessions (R-Texas
\end{tcolorbox}
\caption{Sample from ABD for block size L' = 4 of length L = 2047 under T = 5K diffusion steps (trained with a context length of L = 1024). The generative perplexity of this sample under GPT2-Large is 25.68, and its entropy is 5.65.}

\end{figure}
\end{document}